\newcommand{\dR}{\mathbb{R}}
\newcommand{\dN}{\mathbb{N}}
\newcommand{\dK}{\mathbb{K}}
\DeclareMathOperator{\softmax}{softmax}
\DeclareMathOperator{\NN}{NN}
\DeclareMathOperator{\soft}{softmax}
\DeclareMathOperator{\SG}{SG}
\DeclareMathOperator{\CBoW}{CBoW}
\DeclareMathOperator{\TC}{T}
\DeclareMathOperator{\TP}{T}
\DeclareMathOperator{\ext}{ext}
\newcommand\logeq{\mathrel{\vcentcolon\Leftrightarrow}}
\def\blfootnote{\xdef\@thefnmark{}\@footnotetext}
\let\cref\Cref
\let\subset\subseteq
\let\oldexists\exists
\renewcommand{\exists}{\oldexists\mkern2mu}
\let\oldforall\forall
\renewcommand{\forall}{\oldforall\mkern2mu}
\begin{document}
\title{FCA2VEC: Embedding Techniques for Formal Concept Analysis}
%
%
\author{Dominik Dürrschnabel\inst{1,2} \and Tom Hanika\inst{1,2} \and
  Maximilian Stubbemann\inst{1,2,3}}
%
%
\institute{Knowledge \& Data Engineering Group,
  University of Kassel, Kassel,  Germany\\
  \and
  Interdisciplinary Research Center for Information System Design\\
  University of Kassel, Kassel, Germany\\
  \and L3S Research Center, Leibniz University Hannover, Hannover, Germany
  \email{duerrschnabel@cs.uni-kassel.de, hanika@cs.uni-kassel.de,
    stubbemann@l3s.de} }
\maketitle 
\blfootnote{Authors are given in alphabetical order.  No priority in authorship
  is implied.}

\begin{abstract}
  Embedding large and high dimensional data into low dimensional vector spaces
  is a necessary task to computationally cope with contemporary data sets.
  Superseding ‘latent semantic analysis’ recent approaches like ‘word2vec’ or
  ‘node2vec’ are well established tools in this realm. In the present paper we
  add to this line of research by introducing  ‘fca2vec’, a family of embedding
  techniques for formal concept analysis (FCA). Our investigation contributes to
  two distinct lines of research. First, we enable the application of FCA
  notions to large data sets. In particular, we demonstrate how the cover
  relation of a concept lattice can be retrieved from a computational feasible
  embedding. Secondly, we show an enhancement for the classical node2vec
  approach in low dimension. For both directions the overall constraint of FCA
  of explainable results is preserved.  We evaluate our novel procedures by
  computing fca2vec on different data sets like, wiki44 (a dense part of the
  Wikidata knowledge graph), the Mushroom data set and a publication network
  derived from the FCA community.
  
  \keywords{Vector~Space~Embedding \and Covering~Relation \and Link Prediction \and
    Word2Vec \and Complex~Data \and Formal~Concept~Analysis \and Closed~Sets
    \and Low~Dimensional~Embedding}
\end{abstract}

\section{Introduction}
\label{sec:introduction}
A common approach for the study of complex data sets is to embed them into
appropriate sized real-valued vector spaces, e.g., $\mathbb{R}^{d}$, where $d$ is
a small natural number with respect to the dimension of the original data. This
enables the application of the well understood and extensive tool set from
linear algebra. The practice is propelled by the presumption that relational and
other features from the data will be translated to positions and distances in
$\mathbb{R}^{d}$, at least up to some extent. Especially relative distances of
embedded entities are often meaningful, as shown in seminal works
like~\cite{journals/corr/abs-1301-3781}. For example,
in~\cite{DBLP:conf/aaai/WangZFC14} the authors employed the addressed embeddings
for the complex data kind knowledge graphs (KG). Particularly the authors
from~\cite{DBLP:journals/corr/abs-1710-04099,DBLP:journals/semweb/RistoskiRNLP19}
demonstrated an embedding of the Wikidata
KG~\cite{DBLP:journals/cacm/VrandecicK14} in a 100-dimensional model.

The embedding approaches were shown to be successful for many research fields
like \emph{link prediction}, \emph{clustering}, and \emph{information
  retrieval}. Hence, today they are widely applied. Despite that they also do
exhibit multiple shortcomings. One of the most pressing is the fact that learned
embeddings elude themselves from
interpretation~\cite{DBLP:journals/corr/GoldbergL14} and explanation, even
though they are conducted in comparatively low dimension, e.g., 100. In our work
we want to overcome this disadvantage. We step in with an exploration of the
connection of formal concept analysis notions on the one side and proven
embedding methods like \emph{word2vec} or \emph{node2vec} on the other. Our
investigation is two fold and can be represented by two questions: First, how
can vector space embeddings be exploited for coping more efficiently with
problems from FCA? Secondly, to what extent can conceptual structures from FCA
contribute to the embedding of formal context like data structures, e.g.,
bipartite graphs? In order to deal with the afore mentioned shortcomings, namely
the non-interpretability/explainability, we limit our search for results to the
posed questions using an additional constraint. We want to utilize only two and
three dimensions for the to be calculated real-valued vector space
embeddings. By doing so we ensure, at least to some extent, the possibility of
some human comprehension, interpretation or even explanation of the results.

Equipped with this problem setting we perform different theoretical and
practical considerations. We revisit previous work by
Rudolph~\cite{conf/Rudolph07} and propose different models for learning closure
operators using neural networks. Conversely to this we develop a natural
procedure for improving low dimensional node embeddings using formal
concepts. Here we may point out the peculiarity that our approach does not
require the otherwise necessary parameter tuning. We evaluate the introduced
techniques by experiments on covering relations and link prediction.  Finally, we
present some first ideas on how to treat, identify and extract functional
dependencies (i.e., attribute implications and object implications) using partially
learned closure operators.

The following is divided into five sections. First, we start with an overview
over related work in~\cref{sec:rel}. This will include in particular previous
work from FCA.  In~\cref{sec:foundations} we recollect operations and notations
from formal concept analysis and word2vec. The next section contains our
modeling which connects the field of FCA with word2vec like approaches. Here we
provide some theoretical insights into what aspects of closure operators can be
learned through embeddings. This is followed by an experimental evaluation
in~\cref{sec:experiments} employing three medium sized data sets, i.e., the well
known \emph{Mushroom} context, a dense extract of the Wikidata KG,
called~\emph{wiki44k}~\cite{DBLP:conf/icfca/Hanika0S19}, and a bipartite
publication graph consisting of authors in the FCA community.\footnote{The data
  was extracted from \url{https://dblp.uni-trier.de/} and is part of the testing
  data set for the formal concept analysis software \emph{conexp-clj}, which is
  hosted at GitHub, see
  \url{https://github.com/tomhanika/conexp-clj/tree/dev/testing-data}.} We
conclude our work with~\cref{sec:conc} providing further research questions to
be investigated.

\section{Related Work}%
\label{sec:rel}
We will employ in our work learning models based on neural networks, in
particular but not limited to, word2vec~\cite{journals/corr/abs-1301-3781} and
derived works like node2vec~\cite{grover16}. To the best of our knowledge there
are no previous works on embedding (FCA) closure systems into real-valued vector
spaces using a neural network (NN) based learning setting. However, there is an
plethora of principle investigations for embedding \emph{finite ordinal data} in
real vector spaces, first of all \emph{measurement
  structures}~\cite{SCOTT1964233} (which we found via~\cite{wille1997role}). In
there the author investigated the basic feasibility of such an endeavor. Along
this line of research in the realm of FCA is~\cite{utaRepresentation}, which is
also focused on ordinal structures, in particular \emph{ordinal formal
  contexts}. The only FCA related learning model based approach we are aware of
was investigated by authors in~\cite{LSAapproach} and uses latent semantic
analysis (LSA). Their analysis demonstrates that the LSA learning procedure does
lead to useful structures. Nonetheless we refrain from considering LSA for our
work. The by us investigated NN procedures posses a crucial advantage over LSA:
they are able to cope with incremental updates of the relational data
efficiently~\cite{AAAI1714732}. In the realm of modern complex data structures,
such as Wikidata, this is a necessity.  More research in the line of such data
structures, namely Resource Description Framework Graphs, was explored by the
works~\cite{DBLP:journals/semweb/RistoskiRNLP19,
  DBLP:journals/corr/abs-1710-04099,DBLP:conf/aaai/WangZFC14}. For this the
authors use simple as well as sophisticated approaches. The overall goal in
these compositions is to provide node similarity corresponding to the
underlying relational structure. Since our goal is to excavate and employ a
hidden conceptual relation we will develop an alternative NN method for formal
context like data. For this we also foster from~\cite{conf/Rudolph07}. In there
the author conducts a more fundamental approach for employing NN in the realm of
closure systems. Notably, an encoding of closure operators through NN using FCA
is presented.

\section{Foundations}
\label{sec:foundations}

\subsubsection{Formal Concept Analysis}
\label{sec:FCA}

Before we start with our modeling, we want to recall necessary notions from
formal concept analysis. For a detailed introduction we refer the reader
to~\cite{fca-book}.  A \emph{formal context} is a triple
$\mathbb{K}\coloneqq(G,M,I)$, where $G$ represents the finite \emph{object set},
$M$ the finite \emph{attribute set}, and $I\subseteq G\times M$ a binary
relation called \emph{incidence}.  We say for $(g,m)\in I$ that object $g\in G$
has attribute $m\in M$. In this structure we find a (natural) pair of derivation
operators
$\cdot'\colon\mathcal{P}(G)\to\mathcal{P}(M),~A\mapsto A'\coloneqq \{m\in M\mid
\forall g\in A\colon (g,m)\in I\}$ and
$\cdot'\colon\mathcal{P}(M)\to\mathcal{P}(G),~ B\mapsto B'\coloneqq\{g\in G\mid
\forall m\in B\colon (g,m)\in I\}$. Those give rise to the notion of a
\emph{formal concept}, i.e., a pair $(A,B)$ consisting of an object set
$A\subseteq G$, called \emph{extent}, and an attribute set $B\subseteq M$,
called \emph{intent}, such that $A'=B$ and $B'=A$ holds. The set of all formal
concepts ($\mathfrak{B}(\mathbb{K})$) constitutes together with the order
$(A_1,B_1)\leq(A_2,B_2)\logeq A_1\subseteq A_2$ a lattice~\cite{fca-book},
called \emph{formal concept lattice} and denoted by
$\underline{\mathfrak{B}}(\mathbb{K})\coloneqq(\mathfrak{B}(\mathbb{K}),\leq)$. Throughout
this work we consider formal contexts with $\forall g\in G: \{g\}’\neq\emptyset$
and $\forall m\in M:\{m\}’\neq\emptyset$.

\subsection{Word2Vec}
\label{sec:word2vec}
We adapt the word2vec approach~\cite{journals/corr/abs-1301-3781,
  mikolov2013distributed} that generates vector embeddings for words from large
text corpora. The model gets as input a list of sentences. It is then trained
using one of two different approaches: predicting for a target word the context
words around it (the \emph{Skip-gram} model, called SG); predicting from a set
of context words a target word (the \emph{Continuous Bag of Words} model, called
CBoW). In detail, word2vec works as described in the following.

Let $V=\{v_1,\dotsc,v_n\}$ be the vocabulary. We identify $V$ as a subset of the
vectorspace $\dR^n$ via $\phi: V \to \dR^n, v_i \mapsto e^i$, the $i$-th vector
of the standard basis of $\dR^n$. This identification is commonly known under
the term \emph{one-hot encoding}. The learning task then is the following: Find
for a given $d \in \dN$ with $d\ll n$ a linear map $\varphi: \dR^n \to \dR^d$,
i.e., a matrix $W\in \dR^{d \times n}$ which obeys the goal: words that appear
in similar contexts shall be mapped closely by $\varphi$. The final embedding
vectors of the words of the vocabulary are given by the map
\begin{equation}
  \label{eq:embedding}
  \Upsilon: V \to \dR^d, v \mapsto \varphi(\phi(v)).
\end{equation}
 
To obtain such an embedding, word2vec uses a neural network
approach. This network consists of two linear maps and a softmax activation
function, cf.~\cref{fig:2vec}. The first linear function maps the input from
$\dR ^n$ to $\dR^d$, the second one from $\dR^d$ back to $\dR^n$. In
detail, the neural net function has the structure
\begin{equation}
  \label{eq:net}
  \NN: \dR^n \to \dR^n, x\mapsto \softmax(\psi(\varphi(x))),
\end{equation}
where $\varphi: \dR^n \to \dR^d$ and $\psi: \dR^d \to \dR^n$ are linear maps
with corresponding matrices $W \in \dR^{d \times n}$ and
$U \in \dR^{n \times d}$. The activation function $\softmax$ is given by
\begin{equation*}
  \label{eq:softmax}
\soft: \dR^n \to \dR^n,
\begin{pmatrix}
  x_1\\ \vdots \\ x_n
\end{pmatrix}
\mapsto \frac{1}{\sum_{l=1}^n\exp(x_l)}
\begin{pmatrix}
  \exp(x_1) \\ \vdots \\ \exp(x_n)
\end{pmatrix}.
\end{equation*}

The function $\varphi$ is then used in word2vec for creating embeddings of the
words $v \in V$ via \cref{eq:embedding}. If we use the notion of layers, as
described in~\cite{bishop06}, the neural network function is a three-layer
network, consisting of an input ($I_{L}$), hidden ($H_{L}$), and output layer
($O_{L}$). In this notation the hidden layer is used to determine the
embeddings. We refer the reader again to~\cref{fig:2vec}. In the realm of
word2vec Mikolov et.~Al.~\cite{journals/corr/abs-1301-3781} proposed two
different approaches to obtain the matrices $W$ and $U$ from input data. Those
are called the Skip-gram and the Continuous Bag of Words architecture. We
recollect them in the following.

\begin{figure}[t]
  \centering
  \hfill\includegraphics{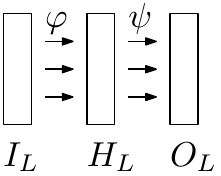}\hfill
    \includegraphics{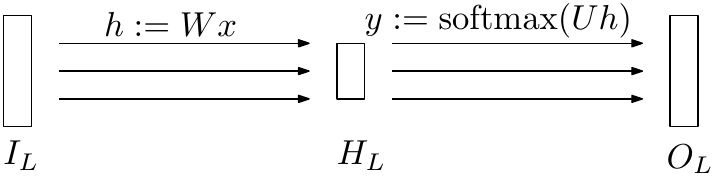}\hfill\null
    \caption{Left: A generic neural network consisting of 3 layers. Right: The
      strucutre of the word2vec architecture. The neural network consists
      of an input and output layer of size $n$ and a hidden layer of size $d$,
      where $d \ll n$.}
    \label{fig:2vec}
\end{figure}

\subsubsection{The Skip-Gram and the Continuous Bag of Words Architecture}
The SG architecture trains the network to predict for a given \emph{target word}
the \emph{context words} around it. Training examples consist of a target word
and a finite sequence of context words. We formalize these as tuples
$(t,(c_i)_{i=0}^l) \in V \times V^{< \dN}$, where $V^{< \dN}$ is the set of
finite sequences of elements of $V$. The SG architecture generates the
input-output pairs $(\phi(t),\phi(c_0)),\dotsc, (\phi(t),\phi(c_l))$ as training
examples, where $\phi$ is the one-hot encoding function, as introduced above.
In the CBoW model, in contrast to SG, the training pairs are generated
differently from $(t,(c_i)_{i=0}^l)$. We take the middle point of the list of
vectors $\phi(c_{0}),\dotsc,\phi(c_{l})$ and try to predict the target word
$\phi(t)$, hence the generated input-output training pair is
$(\frac{1}{l}\sum_{i=0}^l \phi(c_i), \phi(t))$. Both architectures employ the
same kind of a loss function to learn the weights of $W$ and $U$. The error term
is computed through cross-entropy loss. The backpropagation is done via
stochastic gradient descent. A detailed explanation can be found
in~\cite{rong14}.

In word2vec, the pairs of target word and context words are generated from text
data sequences, i.e., lists of sentences. The word2vec approach has a window
size $m \in \dN$ as parameter, i.e., for a given sentence $s=(w_i)_{i=0}^l\in
V^{< \dN}$ pairs of target word and context word sequences are defined in the
following manner. For every $i \in \{0,\dotsc ,l\}$ a reduced window size $m_{i}
\in \{0, \dotsc, m\}$ is chosen randomly and the pair $(w_i,(w_{i+k})_{k=-m_i, k
  \neq 0}^{m_i})$ is used as target word context word sequence pair. Of course,
$m_{i}$ is to be chosen reasonable with respect to $l$.

Note, in the case of word embeddings, the size of the vocabulary often reaches a
level where computing the softmax is computationally infeasible.  Hence, the
softmax layer is often approximated/replaced by one of the two following
approaches. The \emph{hierachical softmax}~\cite{mini08} stores the elements of
the vocabulary in a binary Huffmann tree and then only uses the values to the
path to an element to compute its probability.  Another approach presented
in~\cite{mikolov2013distributed} is \emph{negative sampling}, which uses the
sigmoid function. In the experimental part of this work we deal with formal
contexts of a size where applying the softmax layer is possible.

\section{Modeling}
This section is split in two parts following the two mentioned research
directions. In the first part we demonstrate how embeddings can be used in order
to retrieve (classical) FCA relevant features from data. We will discover that
different aspects of closure operators can be encoded into real-valued vector
space embeddings through neural network techniques. In particular, we are
looking at covering relations as well as canonical bases.  In the second part we
propose a straightforward approach for embedding objects and attributes with
respect to their conceptual structure. While the first part deviates from the
classical word2vec approach due to theoretical considerations, the second part
translates FCA notions to the word2vec model.  Both investigations are governed
by the overall goal from FCA to create explainable methods. To this end we apply
for all our methods only low dimensional embeddings, i.e., two or three
dimensions. Hence, these embeddings comprise the potential for human
interpretability or even explainability, in contrast to high dimensional ones.

\subsection{Retrieving FCA Features Through Closure2Vec}
\label{sec:retrieving}
The goal of this section is to employ the ideas from word2vec to improve the
computational feasibility of common tasks in the realm of Formal Concept
Analysis. Doing so, we analyze different approaches and finally settle with an
novel embedding technique that can provide more efficient computations. In
particular, we consider the FCA problems of finding the covering relation of the
concept lattice structure and the rediscovering of canonical bases. The linchpin
of our investigation is the encapsulation of the closure operator of a formal
context.

Analogue to the approach of word2vec we want to achieve a meaningful embedding
of the closure operator into $R^d$ for $d=2$ or $d=3$. For the rest of this part
we assume that for both, the attribute set and the object set are indexed, i.e.,
for some context $(G,M,I)$ we denote the object set by $G=\{g_1,g_2,\dotsc\}$
and the attribute set by $M=\{m_1,m_2,\dotsc\}$. This enables the possibility
for defining the \emph{binary encoding} of an object set $A$ as the vector $v\in
\{0,1\}^{|M|}$, with the $v_i= 1$ if and only if $m_i\in A$. Dually this can be
done for attribute sets.

\subsubsection{Exact Representation of the Closure Operator}
\label{sec:rudolph}

Thanks to a previous work by S.~Rudolph~\cite{conf/Rudolph07}, we are aware that
it is possible to represent any closure operator on a finite set into a neural
network function using formal concept analysis. The network, as proposed
in~\cite{conf/Rudolph07}, consists of an \emph{input layer} $I_{L}\coloneqq
\{0,1\}^{|M|}$, a \emph{hidden layer} $H_{L}\coloneqq\{0,1\}^{|G|}$ as well as an
\emph{output layer} called $O_{L}\coloneqq \{0,1\}^{|M|}$. The mapping between
$I_{L}$ and $H_{L}$ is defined as $\varphi = t \circ w$ consisting of a linear
mapping $w$ with transformation matrix $W=(w_{jh})\in\{-1,0\}^{|M| \times |G|}$,
such that
\[w_{jh}\coloneqq
  \begin{dcases}
    0 & \text{if } (g_j,m_h) \in I,\\
    -1 & \text{otherwise,}
  \end{dcases}\]
and a non-linear activation function $t:\mathbb{R}^{|G|}\to \mathbb{R}^{|G|}$ with each
component being mapped using the function $\tilde{t}:\mathbb{R} \to \{0,1\}$ defined as
\[\tilde{t}(x)=
\begin{dcases}
  1 & x = 0,\\
  0 & x < 0.
\end{dcases}\]

The mapping between $H_{L}$ and $O_{L}$ is defined analogously by $\psi = \hat t
\circ \hat w$, where once again $\hat w$ is a linear mapping with transformation
matrix $\hat W=W^{T}$. The function $\hat t: \mathbb{R}^{|M|} \to
\mathbb{R}^{|M|}$ is once again defined component wise with each component being
$\hat t$. Using this construction the function $\varphi \circ \psi$ encapsulates
the closure operator. To find the closure of some attribute set $B \subset M$,
one has to compute $\varphi \circ \psi$ of its binary encoding. Similar to the
both derivation operators introduced in~\cref{sec:foundations} does the mapping
$\varphi$ compute the attribute derivation and the mapping $\psi$ the object
derivation, both in their binary encoding.

\subsubsection{Considering the Unconstraint Problem}
\label{sec:why}

Considering the well established word2vec architecture the following
idea seems intuitive. Take the neural network layers as defined in Rudolph's
architecture, but replace the hidden layer by a layer containing either two or
three dimensions, i.e., we have $H_{L}=\mathbb{R}^d$. Instead of presetting
$\varphi$ and $\psi$, as in the last section, we want to retrieve them through
machine learning. However, it may be noted that it is not meaningful to allow
arbitrary functions. To see this, consider the following example with $d=1$. Let
$\mathop{s}:\{0,1\}^{|M|}\to\mathbb{N}$ be an injective mapping from the set of
binary vectors of length $|M|$ to the natural numbers. Naturally there is an
inverse map $\mathop{s}^{-1}:s[\{0,1\}^{|M|}]\to\{0,1\}^{|M|}$, where
$s[\{0,1\}^{|M|}]$ denotes the image of $\mathop{s}$ of the domain. Since
$\mathbb{N}$ is contained in $H_{L}$, we may find a natural continuation of
$\mathop{s}^{-1}$ to $\mathbb{R}$ by
$\overline{\mathop{s}^{-1}}:\mathbb{R}\to\{0,1\}^{|M|}$ such that
$\overline{\mathop{s}^{-1}}(x)\coloneqq s^{-1}(\lfloor x\rfloor)$. Furthermore,
let $\mathop{cl}$ be the double application of the derivation operator, i.e.,
$(\cdot’)’$ in the binary encoding.  Using this setup let $\varphi\coloneqq s$
and $\psi\coloneqq\overline{\mathop{s}^{-1}}\circ\mathop{cl}$.  Using these
function one can easily see that even though the neural network is able to
compute the closure operator, the layer $H_{L}$ contains no information about
the formal context. This suggests that the set of possible functions has to be
further constrained.

\subsubsection{Representing Closure Operators Using Linear Functions}
\label{sec:linear}

Rudolph’s approach for representing a closure operator by a neural network
function is sound and complete. It consists, as discussed of two linear
functions and two non-linear activation functions. The later, however, is
incompatible with the neural network proposed by word2vec. This procedure, as
noted in~\cref{sec:word2vec}, does consist of a linear map $\varphi$ from
$I_{L}$ to $H_{L}$, which is also the final embedding we are looking for in our
work. Note that it is not possible to represent a closure operator using a
linear function, since the closure of the empty set is not necessary an empty
set. The same fact is true for affine mappings, as showed in the following.

\begin{proposition}
  \label{prop}
  Let $(G,M,I)$ be a formal context. The set of all affine linear mappings,
  which represent the closure operator on the attribute set in binary encoding,
  can be empty.
\end{proposition}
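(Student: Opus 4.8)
The plan is to prove the claim by producing a single counterexample: a small formal context whose attribute closure operator cannot be realized by any affine map on $\dR^{|M|}$, which shows the set of such maps can indeed be empty.

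The starting observation is that an affine map $f\colon\dR^{|M|}\to\dR^{|M|}$, $f(x)=Wx+b$, sends affine combinations to affine combinations: if $\sum_{i}\lambda_{i}=1$ then $f(\sum_{i}\lambda_{i}x_{i})=\sum_{i}\lambda_{i}f(x_{i})$. Hence, writing $e_{A}\in\{0,1\}^{|M|}$ for the binary encoding of $A\subseteq M$, if $f$ represents the closure operator in binary encoding, then every affine relation among binary encodings of attribute sets forces the same affine relation among the binary encodings of their closures. I would exploit the relation $\tfrac12 e_{\emptyset}+\tfrac12 e_{\{m_{1},m_{2}\}}=\tfrac12 e_{\{m_{1}\}}+\tfrac12 e_{\{m_{2}\}}$ (both sides equal $(\tfrac12,\tfrac12,0,\dots)$), which would force $e_{\operatorname{cl}(\emptyset)}+e_{\operatorname{cl}(\{m_{1},m_{2}\})}=e_{\operatorname{cl}(\{m_{1}\})}+e_{\operatorname{cl}(\{m_{2}\})}$ in $\dR^{|M|}$.

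Next I would choose the context so that this forced equation fails. Take $M=\{m_{1},m_{2},m_{3}\}$ and arrange that $\operatorname{cl}(\emptyset)=\emptyset$, $\operatorname{cl}(\{m_{1}\})=\{m_{1}\}$, $\operatorname{cl}(\{m_{2}\})=\{m_{2}\}$, but $\operatorname{cl}(\{m_{1},m_{2}\})=M$; then the left-hand side of the forced equation is $(1,1,1)$ and the right-hand side is $(1,1,0)$, a contradiction, so no affine map represents this closure operator. Concretely I would take $G=\{g_{1},g_{2},g_{3}\}$ with $I=\{(g_{1},m_{1}),(g_{2},m_{2}),(g_{3},m_{1}),(g_{3},m_{2}),(g_{3},m_{3})\}$, i.e.\ the rows of the context are the attribute sets $\{m_{1}\}$, $\{m_{2}\}$ and $M$. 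A direct computation of the two derivation operators confirms the four closures above, and one also verifies the standing assumptions of the paper: $\{g_{i}\}'\neq\emptyset$ for every object (it equals the corresponding row), and $\{m_{j}\}'\neq\emptyset$ for every attribute (in particular $\{m_{3}\}'=\{g_{3}\}$).

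I expect the only delicate point to be compatibility with the paper's blanket requirement that every object and every attribute have a nonempty derivation; this is precisely why the object $g_{3}$ carrying the full attribute set is included, and why the example uses three rather than two attributes. Everything else — the affine-combination argument and the four closure computations — is routine, so I anticipate no real obstacle.
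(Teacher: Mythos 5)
Your proof is correct and follows essentially the same route as the paper: exhibit a small counterexample context and derive a contradiction from the fact that an affine map must preserve an affine relation among the binary encodings (your $e_{\emptyset}+e_{\{m_1,m_2\}}=e_{\{m_1\}}+e_{\{m_2\}}$ versus the paper's $e_{\{1,2,3\}}=e_{\{1,2\}}+e_{\{3\}}-e_{\emptyset}$, handled there via the augmented vector $[1\ v]$ and a linear map) while the corresponding closures violate it. Your closure computations and the check of the paper's standing nonemptiness assumptions are accurate, so the argument goes through.
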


\begin{proof}
    \begin{figure}[t]
    \centering
    \begin{cxt}%
      \att{1}%
      \att{2}%
      \att{3}%
      \obj{.xx}{a}%
      \obj{x.x}{b}%
      \obj{.x.}{c}%
    \end{cxt}
    \caption{A formal context counterexample for~\cref{prop}.}
    \label{fig:counterexample}
  \end{figure}
  Consider the formal context from \Cref{fig:counterexample}. For the sake of
  simplicity we speak about attribute and object sets and their respective
  binary encodings interchangeably. Assume that there is an affine mapping,
  which maps each attribute set to its closure. Then there is a linear mapping
  $l$, such that for each attribute set $v \in \{0,1\}^{|M|}$ the vector $v'=[1
  \ v]$ is mapped to the closure of $v$. Here $[1 \ v]$ denotes the vector which
  results from the concatenation of a single bit (valued 1) with $v$. Using this
  one can infer that from
  \begin{align*}
  &\{\}''= \{a,b,c\}' = \{\}\\
  &\{3\}''=\{a,b\}'=\{3\}\\
  &\{1,2\}''=\{\}'=\{1,2,3\}\\
  &\{1,2,3\}''=\{\}'=\{1,2,3\}
  \end{align*}
  follows that
  \begin{align*}
    &l(1,0,0,0)=(0,0,0)\\
    &l(1,0,0,1)=(0,0,1)\\
    &l(1,1,1,0)=(1,1,1)\\
    &l(1,1,1,1)=(1,1,1).
  \end{align*}
  However, as $l$ is a linear mapping, it is required that the following holds. 
  \begin{align*}
    l(1,1,1,1)&=l(1,1,1,0)+l(1,0,0,1)-l(1,0,0,0)\\
              &=(1,1,1)+(0,0,1)-(0,0,0)\\
              &=(1,1,2),
  \end{align*}
  Hence, we obtain a contradiction. \null\hfill$\square$
\end{proof}

\begin{corollary}
 Let $(G,M,I)$ be a formal context. The set of all affine linear mappings,
  which represent the derivation operator on the attribute set in binary encoding,
  can be empty.
\end{corollary}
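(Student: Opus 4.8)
The plan is to reuse the formal context from \cref{fig:counterexample} and run an argument almost identical to the proof of \cref{prop}, only stopping at the intermediate stage of the double derivation. As in that proof, I would identify attribute sets and object sets with their binary encodings, and observe that an affine linear map representing the attribute derivation exists if and only if there is a linear map $l$ with $l([1\ v]) = v'$ for every $v \in \{0,1\}^{|M|}$, where $[1\ v]$ denotes the concatenation of the constant bit $1$ with $v$. Here the input of $l$ is encoded over $M$ (with the leading constant bit), while its output is encoded over $G$, so $l$ maps $\{0,1\}^{1+|M|}$ into $\mathbb{R}^{|G|}$.

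First I would read off the values $l$ is forced to take on four augmented inputs, directly from the incidences in \cref{fig:counterexample}: from $\{\}' = \{a,b,c\}$, $\{3\}' = \{a,b\}$, $\{1,2\}' = \{\}$ and $\{1,2,3\}' = \{\}$ one gets $l(1,0,0,0) = (1,1,1)$, $l(1,0,0,1) = (1,1,0)$, $l(1,1,1,0) = (0,0,0)$ and $l(1,1,1,1) = (0,0,0)$. These are exactly the inner derivations that already appear, unnamed, in the computation in the proof of \cref{prop}. Then, using the same affine dependency $(1,1,1,1) = (1,1,1,0) + (1,0,0,1) - (1,0,0,0)$, linearity of $l$ forces $l(1,1,1,1) = (0,0,0) + (1,1,0) - (1,1,1) = (0,0,-1)$, contradicting $l(1,1,1,1) = (0,0,0)$. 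Hence for this context no affine map represents the attribute derivation, which proves the corollary.

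I expect no genuine obstacle here; the only points needing a little care are the bookkeeping of the two distinct encodings (input over $M$, output over $G$) and the remark that the four chosen derivation values are honest $0/1$ vectors, so that the contradiction with the non-$0/1$ vector $(0,0,-1)$ is legitimate. One might instead be tempted to deduce the corollary from \cref{prop} by noting that the closure operator is the composite of attribute derivation with object derivation and that composites of affine maps are affine; but that argument only shows the two derivations cannot both be affine, without pinning down which set of representing maps is empty, so the direct computation above is the cleaner route.
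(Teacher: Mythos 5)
Your proof is correct, but it takes a genuinely different route from the paper. You argue directly on the context of \cref{fig:counterexample} that the attribute derivation itself admits no affine representation: the four forced values $l(1,0,0,0)=(1,1,1)$, $l(1,0,0,1)=(1,1,0)$, $l(1,1,1,0)=(0,0,0)$, $l(1,1,1,1)=(0,0,0)$ are indeed the correct derivations (input encoded over $M$ with the leading bias bit, output over $G=\{a,b,c\}$), and the same affine dependency $(1,1,1,1)=(1,1,1,0)+(1,0,0,1)-(1,0,0,0)$ used in \cref{prop} forces $l(1,1,1,1)=(0,0,-1)$, contradicting $(0,0,0)$. The paper instead uses exactly the reduction you dismiss at the end: assume an affine representation $a$ of the attribute derivation exists; by duality there is an affine representation $a^{d}$ of the object derivation, and a suitable (augmented) composition of the two would be an affine representation of the closure operator, contradicting \cref{prop}. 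Your computation buys an explicit witness context and avoids discussing why the augmented composition is again affine, so it is the more self-contained argument. However, your criticism of the composite route is slightly overstated: since the corollary only claims the set of affine representations \emph{can} be empty (an existence statement over formal contexts), it suffices to know that the counterexample context and its dual cannot both have affine-representable derivations; one of the two is then a witness even though the argument does not identify which, so the paper's shorter reduction is also sound.
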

\begin{proof}
  Assume there is such an affine linear map $a$. By duality we know that there
  must be an affine linear map $a^{d}$ on the object set. A suitable composition
  of those mapping, i.e., using augmentation, contradicts with
  \Cref{prop}. \null\hfill$\square$
\end{proof}

\subsubsection{Linear Representable Part of Closure Operators}
\label{sec:linearization}

We know from the last section that it is neither possible to represent the
closure operator nor the derivation operator using an affine linear
function. Still, it might be possible to obtain a meaningful approximation of an
embedding using a linear map. In order to obtain some empirical evidence if
studying this approach is fruitful we conduct a short experiment.  Consider the
neural network architecture as depicted in~\Cref{fig:2vec} (left). Furthermore,
let the input layer $I_{L}$ of size $|M|+1$ be connected to a hidden layer
$H_{L}$ of low dimension, i.e., two or three, by a linear function
$\varphi$. The layer $H_{L}$ is connected to the output layer $O_{L}$ that is of
dimension $M$ using a function $\psi$ that consists of a linear function
together with a \emph{sigmoid} activation function. The first bit of $I_{L}$ is
always set to 1 and therefore a so-called bias unit. For our experiment we now
train the neural network by showing it randomly sampled attribute sets as inputs
and their attribute closures as output, both in their binary encoding. We employ
for this mean squared error as the loss function and a learning rate of
0.001. Even though the neural network starts to memorize the samples it has seen
after around 20 epochs, it does not generalize to attribute sets not previously
seen in training. Furthermore, the resulting embedding into $\mathbb{R}^d$ does
empirically not expose a meaningful structure. Additionally, this observation
does not alter by changing the function $\psi$ to a linear function. Also,
experiments in which we investigated learning only the derivative operator were
not fruitful. This is the expected behavior from our considerations in the last
section. We do not claim that there are no better performing, approaches for
this task. However, for us this result motivate a progression to different task.

\subsubsection{Non-linear Embedding through Closure2Vec}
\label{sec:nonlinear}

As linear embeddings do not seem to work out for our learning task, we employ a
different approach. Let the \emph{closure Hamming distance (chd)} for two
attribute sets $A\subseteq M$ and $B\subseteq M$ be the distance function
$d(A,B)\coloneqq d_{H}(A^{b},B^{b})$, where $\cdot^{b}$ denotes the binary
representation and $d_{H}$ is the Hamming distance. Note that the closure
Hamming distance is not a metric, as the distance between two attribute sets
sharing the same closure is 0, even though they are not the same. Based on the
idea that two attribute sets are similar if they have a small chd, we want to
embed the attribute sets into a low-dimensional real-valued space, i.e., two or
three dimension. The goal here is that the embedding is approximately an
isometric map.

\begin{figure}[t]
  \centering
    \centering
    \includegraphics{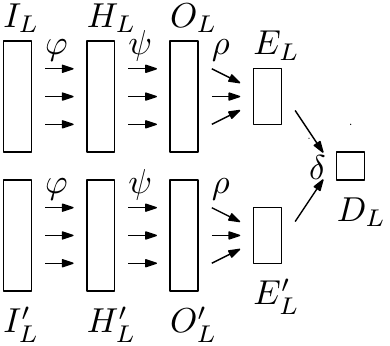}
    \caption{The siamese neural network used to compute \emph{closure2vec}. The
      functions $\varphi, \psi, \rho$ are shared functions between the layers in
      this model. }
    \label{fig:siamese}
\end{figure}

We train a neural network architecture that we call \emph{closure2vec} to learn
the just introduced chd. For this, consider the network depicted in
\Cref{fig:siamese}. It consists of two input layers $I_{L}$ and $I_{L}'$, each
of size $|M|$. Then the function $\varphi$ consisting of a linear function and a
\emph{relu-activation} function (see~\cite{lecun2015learning}) is used to feed
the data into the hidden layers $H_{L}$ and $H_{L}'$ respectively, both of size
$|G|$. After this the function $\psi$, consisting of a linear function and a
relu-activation function is applied to the two ``streams'' in the network. The
result then is input for two output layers $O_{L}$ and $O_{L}'$, both of size
$|M|$. This, however is not the final step of this network model. Finally, the
layers $E_{L}$ and $E_{L}'$, which consist of either two or three dimensions,
are fed from $O_{L}$ and $O_{L}’$, respectively, via $\rho$. This function is
again build via composing a linear function and another relu-activation
function. The output layer $D_{L}$ consists has size one. Using a fixed
function $\delta$ (in our case either the Euclidean distance or cosine distance)
we compute a distance between $O_{L}$ and $O_{L}'$. By sharing the functions
$\varphi$, $\psi$, and $\rho$ between the different layers we ensure that a
commutation of the input sets does not lead to a different prediction of the
neural network.

The network then is trained by showing it two attribute sets in binary encoding
as well as their closure Hamming distance at the output layer. The required loss
function for this setup is then the mean squared error. The learning rate of our
network is set to 0.001. The training set for our approach is sampled as
follows: For some $t\in\mathbb{N}$ take all attribute combinations that contain
at most $t$ elements and put them in some set $\mathcal{X}=\{X_1,X_2,\dotsc\}$,
hence, $X_{i}\subseteq M$. For each $X_i\in \mathcal{X}$ generate a random
attribute $m_{i} \in M$. Let the set $\mathcal{Y}=\{Y_1, Y_2,\dotsc\}$ with
\[  Y_i=
  \begin{dcases}
    X_i \backslash \{m_i\} & \text{if }m_i \in {X_i},\\
    X_i \cup \{m_i\}&\text{else,}
  \end{dcases}\] and finally
$Z=\{z_{i}\coloneqq d(X_{i},Y_{i})/|M|\mid X_{i}\in
\mathcal{X},Y_{i}\in\mathcal{Y})\}$ as the set of pairwise closure Hamming
distances. The network is trained by showing it the binary encodings of $X_i$,
$Y_i$, and $z_i$. Note, the values of $Z$ are normalized. We will evaluate this
setup in~\cref{sec:experimentsfca} on different data sets and for relevant
notions of FCA.

\subsection{Object2Vec and Attribute2Vec}
\label{sec:modelobj}

The idea of adapting word2vec to non-text mining problems is a common approach
these days. Particular examples for that are node2vec~\cite{grover16} and
\emph{deepwalk}~\cite{perozzi14}. In the realm of networks it was shown that SG
based architectures for node embeddings can beat former approaches that use
classic graph measures. They significantly enhanced node classification and link
prediction~\cite{perozzi14, grover16} tasks. To do so, they interpret nodes as
words for their vocabulary, random walks through the graphs to generate
``sentences'', and then employ word2vec.

Analogously, we transfer word2vec to the realm of formal concept analysis. In
the following we present an approach to use the concepts of a given formal
context to generate embeddings of the object set or attribute set. Referring to
its origin, we name our novel methods \emph{object2vec} and
\emph{attribute2vec}, respectively. Since both methods will emerge to be dual to
each other for obvious reasons we only consider object2vec in the following. The
basic idea is to interpret two objects to be more close to each other, if they
are included in more concept extents together. Hence, the set of extents of a
formal context is used to generate a low dimensional embedding of the object set
$G$.

In the following we explain how to adapt the CBoW and the SG architecture to the
realm of formal concept analysis. We show how to generate (multi-) sets of
training examples from a given formal context. As an analogon for target word
context words we introduce target object and context object sets. From this we
can draw pairs as already done in CBoW and SG.

\subsubsection{SG and CBoW  in the Realm of Object2Vec}
Let $\mathbb{K}\coloneqq(G,M,I)$ be a (finite) formal context. The vocabulary is given
by $G=\{g_1,\dotsc g_n\}$. Furthermore, let be $\phi: G \to \dR^n, g_i \mapsto
e^i$ the one-hot encoding of our vocabulary (objects). We derive our
\emph{training examples} from the set
\begin{equation*}
  \TC(\mathbb{K})\coloneqq\{ (a, A\setminus \{a\})~|~a \in A, |G|>|A|>1, \exists B
  \subset M: (A,B) \in \mathfrak{B}(\mathbb{K})\}, 
\end{equation*}
where every element is a pair of a \emph{target object} and some extent in which
$a$ is element of. More specifically, we remove $a$ from this extent. We
interpret then $A\setminus\{a\}$ as the \emph{object context set}. The word
``context'' here refers to the word2vec approach and is not be confused with
``formal context''.  Note that we do not generate any training examples from the
concept $(G,G')$ since the extent $G$ does not provide any information about
the formal context.

\paragraph{The Skip-gram Architecture for Object2Vec}
In the SG model, the input and output training pairs generated from a target
object and an object context set, i.e., the elements $(t,C) \in \TC(\mathbb{K})$, are 
given by:

\begin{equation}
  \label{eq:SG}
  \TP_{\SG}(t,C)\coloneqq\{(\phi(t),\phi(c))~|~c \in C\}.
\end{equation}
Using this it is possible for some pairs $(t_1,C_1),(t_2,C_2) \in
\TC(\mathbb{K})$ where we have $(t_1,C_1) \neq (t_2,C_2)$ that
$\TP_{\SG}(t_1,C_1) \cap \TP_{\SG}(t_2,C_2) \neq \emptyset$. Hence, samples can
be generated multiple times in this setup. We account for this in our modeling,
as the reader will see in the presentation of the algorithm. To give an
impression of our modeling we furnish the following example. 

\begin{example}
  Consider the classical formal context from~\cite{fca-book} called ``Living
  beings and Water'', which we depicted in~\cref{fig:water}. We map the objects
  with the one-hot encoding: $\phi: G\mapsto\dR^8$, with
  $\phi(a) =e^1, \phi(b)=e^2,\dotsc,\phi(h)=e^8$. Using this we easily find a
  training sample from $\TC_{SG}$ which is generated two times. Consider the
  concepts $(\{a,f,g\}, \{4,5,6\})$ and $(\{f,g,h \},\{5,6,7\})$. The first
  concept generates the pairs of target object and object context sets
  $(a, \{f,g\})$ as well as $(f, \{a, g\})$ and $(g, \{a, f\})$. The second
  formal concept generates the pairs $(f, \{g,h\}), (g, \{f, h\})$ and
  $(h, \{f, g\})$. If we train in the SG architecture we derive from the pair
  $(f, \{a, g\})$ the training examples $(e^6,e^1)$ and $(e^6,e^7)$. Also, from
  the pair $(f, \{g,h\})$ we derive the examples $(e^6,e^7),(e^7,e^8)$. Hence,
  the training example $(e^6,e^7)$ is shown to be drawn at least twice per
  epoch.
\end{example}

\begin{figure}[h]
  \centering
  \begin{cxt}
    \cxtName{}
    \att{1}
    \att{2}
    \att{3}
    \att{4}
    \att{5}
    \att{6}
    \att{7}
    \att{8}
    \att{9}
    \obj{...XXX..X}{a}
    \obj{xxx..x...}{b}
    \obj{xx.x.x.X.}{c}
    \obj{xxxx.x...}{d}
    \obj{X.X..X...}{e}
    \obj{...xxxx..}{f}
    \obj{..xxxxx..}{g}
    \obj{..x.xxx..}{h}
  \end{cxt}
  \caption{Formal context of the classical ``Living beings and Water'' example
    from~\cite{fca-book}.}
  \label{fig:water}
\end{figure}



\paragraph{The CBoW Architecture for Object2Vec}
Analogously to the cases of CBoW in word2vec we will use for object2vec a notion
of ``middle point'' for object context sets. More specifically, for a pair of
target object and object context set $(t,C) \in
\TC(\mathbb{K})$ the training example is derived as follows:
\begin{equation*}
  \TP_{\CBoW}(t,C) \coloneqq \left(\frac{1}{|C|}\sum_{c \in C} \phi(c),
  \phi(t)\right)
\end{equation*}
Hence, in the CBoW model the set of all training examples is given by
\begin{equation}
  \label{eq:CBoW}
  \TP_{\CBoW}(\mathbb{K})\coloneqq \{\TP_{\CBoW}(t,C)~|~(t,C) \in \TC(\mathbb{K}) \}.
\end{equation}

\begin{lemma}
\label{lemma-cbow}
The map $\TP_{\CBoW}: \TP (\dK) \to \dR^2, (t,C) \mapsto \left(\frac{1}{|C|}\sum_{c \in C} \phi(c),
  \phi(t)\right)$ is injective.
\end{lemma}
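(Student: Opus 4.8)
The plan is to unwind the definition of $\TP_{\CBoW}$ and exploit the rigidity of the one-hot encoding, treating the two components of the output pair separately. So suppose $(t_1,C_1),(t_2,C_2)\in\TP(\dK)$ satisfy $\TP_{\CBoW}(t_1,C_1)=\TP_{\CBoW}(t_2,C_2)$; the goal is to conclude $t_1=t_2$ and $C_1=C_2$. Since the image is a pair of vectors (the ``$\dR^2$'' in the statement being read as the space of such pairs), this single equation splits into an equation for the first coordinate and one for the second.

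For the second coordinate the equation reads $\phi(t_1)=\phi(t_2)$. As $\phi\colon G\to\dR^n$ is the one-hot encoding $g_i\mapsto e^i$, it is injective, and hence $t_1=t_2$ at once.

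For the first coordinate, the key observation is that for a genuine set $C\subseteq G$ of pairwise distinct objects the vector $\sum_{c\in C}\phi(c)$ is precisely the characteristic vector of the index set of $C$: its $i$-th entry equals $1$ if $g_i\in C$ and $0$ otherwise, with no cancellation or coincidental superposition possible. Moreover, by the definition of $\TP(\dK)$ we have $C=A\setminus\{a\}$ with $|A|>1$, so $|C|\geq 1$ and division by $|C|$ is legitimate; thus $\frac{1}{|C|}\sum_{c\in C}\phi(c)$ is the vector whose support is $\{i : g_i\in C\}$ and all of whose nonzero entries equal $1/|C|$. Equating the first coordinates for $(t_1,C_1)$ and $(t_2,C_2)$ therefore forces equality of supports, i.e. $C_1=C_2$ (and, consistently, equality of the common nonzero value gives $|C_1|=|C_2|$). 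Together with $t_1=t_2$ this yields $(t_1,C_1)=(t_2,C_2)$, which is the claim.

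There is essentially no hard step here; the only point deserving a moment's care is that $C$ is a set and not a multiset, so that the summands $\phi(c)$ are distinct standard basis vectors and the averaging is information-preserving — it is exactly this that rules out two different context sets being identified. I would expect the write-up to be a few lines long.
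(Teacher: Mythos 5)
Your proof is correct and follows essentially the same route as the paper's: the paper establishes injectivity of the averaging map on sets of standard basis vectors (via a coordinate where one set has a positive entry and the other a zero) and combines it with injectivity of the one-hot encoding, which is exactly your support argument for the first coordinate together with $\phi(t_1)=\phi(t_2)\Rightarrow t_1=t_2$ for the second.
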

\begin{proof}
  Let $E_n$ be the set of standard basis vectors. We first show that
  \begin{equation*}
    f:2^{E_n} \to \dR^n, E \mapsto \frac{1}{|E|}\sum_{e \in E} e
  \end{equation*}
  is injective. Let $ E_1, E_2 \in 2^{E_n}$ with $E_1 \neq E_2$ where we assume
  $E_1 \not\subset E_2$ w.l.o.g.. Hence, we find $e^i \in E_1 \setminus E_2$.
  It then follows $f(E_1)_i > 0 =f(E_2)_i$, therefore $f(E_1) \neq f(E_2)$ and
  $f$ is injective. The function $\phi: G \to E_n$ is also injective, so the map
  $\Phi: 2^G \to 2^{E_n}, A \mapsto \Phi(A)\coloneqq\phi [A]$ is also injective. For all
  $(t,C) \in \TP(\dK)$ the equality
  \begin{equation*}
    \TP_{\CBoW}(t,C)=\left(f(\Phi(C)),\phi(t)\right)
  \end{equation*}
  holds. Hence, $\TP_{\CBoW}$ is injective.  \null\hfill$\square$
\end{proof}

It follows that the modeling of training samples as set (cf.~\cref{eq:CBoW}) is
approbeate since no training example is derived multiple times from $\TP(\dK)$.

\paragraph{Order of the training examples}
Let $\mathbb{K}\coloneqq(G,M,I)$ be a formal context. We want to embed the set
of objects. Since we model our training examples as sets with frequency (in the
case of SG) we need to discuss how to construct a traversable list of training
examples for our training procedures.  This is not necessary in word2vec where
the order is given naturally by the order of the given text. We propose to
generate the traversable list in the following manner:
\begin{enumerate}
\item For all extents in $A$ of $\dK$, construct a list $L_A$ that consists of
  all elements of $A$. The order in the list $L_A$ should be random.
\item Construct a list $L_{\ext}$ that consists of all lists $L_A$ in a random order.
\item For each $L_A$  use \cref{eq:CBoW} or \cref{eq:SG} to add the training
  examples to the list of all training examples.
\end{enumerate}
We present an algorithmic representation of this course of action
in~Algorithm~1.

\begin{algorithm}[t]
  \caption{The pseudocode of object2vec. The algorithm takes a formal context
    and an option determining Skip-gram or Continous Bag of Words. It returns a
    list of  pairs to train the neural network.}
  \DontPrintSemicolon \SetKwInOut{Input}{Input} \SetKwInOut{Output}{Output}
  \Input{a formal context $(G,M,I)$ and type $\in \{\SG, \CBoW\}$} \Output{A
    list $L$ of training examples.}
  \BlankLine $L \leftarrow [~]$ \\
  $L_{\ext} <-$ list-of-extents(G,M,I) in randomized order (excluding $G$). \\
  \ForAll{A in $L_{\ext}$}{
    $L_A \leftarrow$ list$(A)$, with randomized order.\\
    \ForAll{$o$ in $L_A$}{
      \If{type $=\SG$} {
        \ForAll{$\check{o}$ in $L_A$:}{
          \If{$o \neq \check{o}$}{
            add ($(\phi(o),\phi(\check{o})$)) to $L$}}}
      \If{type $=\CBoW$ and $| A | >1$}{
        add ($(\frac{1}{|L_A|-1}\sum_{\check{o} \in l_a, o \neq \check{o}} \phi(\check{o}),
        \phi(o))$) to $L$}
    }}
  \Return $L$
  \label{alg:algo}
\end{algorithm}


\section{Experiments}
\label{sec:experiments}
This section contains experimental evaluations for both our research directions.
We conduct our experiments on three different data sets. We depict the
statistical properties of these data sets in~\cref{fig:datasets}. A detailed
description of each follows.

\paragraph{wiki44k}
The first data set we use in this work is the wiki44k data set taken 
from~\cite{DBLP:conf/semweb/HoSGKW18} and then adapted by
\cite{DBLP:conf/icfca/Hanika0S19}. It consists of relational data extracted from
Wikidata in December 2014. Even though the it is constructed to be a
dense part of the Wikidata knowledge graph, it is relatively sparse for
a formal context.

\paragraph{Mushroom}
The Mushroom data set~\cite{Dua:2019,schlimmer1981mushroom} is a well
investigated and broadly used data set in machine learning and knowledge
representation. It consists of 8124 mushrooms. It has twenty two nominal features
that are scaled into 119 different binary attributes to form a formal
context. The Mushroom data set, compared to wiki44k, is more dense, and even
though it has a smaller number of objects, contains 10 times the concepts of
wiki44k.

\paragraph{ICFCA} To generate the ICFCA context, we use the DBLP dump from
2019-08-01 which can be found at \url{https://dblp.uni-trier.de/xml/}. We
exclude authors in the DBLP data that have the type ``disambiguation'' or
``group''. As attributes we use all publications of these authors. By all
publications we denote all publications present in DBLP, not restricted to ICFCA
proceedings. To exclude publication originating from editing etc (which do not
indicate any co-authorship) we discard all publications that are not of the
type ``article'', ``inproceedings'', ``book'' or ``incollection''. We also
discarded all publications that are marked with an addtitional ``publtype'' such
as ``withdrawn'', ``informal'' and  ``informal withdrawn''. Note that this also
excludes works that are solely published on preprint servers. This modeling
results in a formal context with 351 objects and 12614 attributes. However, as
later indicated in the experiments, for the neural network training we will use
only a part of that formal context. This part is derived by omitting all
publications after 2015 and then considering the largest component. The
specifications of the resulting formal context ICFCA* are depicted
in~\cref{fig:datasets}. The ICFCA data set is available in the
\texttt{conexp-clj}\footnote{see~\url{https://github.com/tomhanika/conexp-clj}}
software~\cite{DBLP:conf/icfca/HanikaH19} for FCA hosted on GitHub. By the
nature of being based on a publication network, it is very sparse and contains
only 878 concepts.

\begin{table}[t]
  \centering
  \caption{Comparison of the different data sets used in this work. For ICFCA we
    do only indicate the specification of the context as used for the training
    in the link prediction model. To compute the canonical base of the ICFCA
    data set as is not feasible for the equipment at our research group.}
  \begin{tabular}{lrrr}
    \toprule
    &Wiki44k&Mushroom&ICFCA*\\
    \midrule
    Number of Objects&45021&8124&263\\
    Number of Attributes&101&119&8442\\
    Density&0.04&0.19&0.005\\
    Number of Concepts&21923&238710&680\\
    Mean attributes per concept&7.01&16.69&33.28\\
    Mean objects per concept &109.47&91.89&2.51\\
    Size of the Canonial Base& 7040 & 2323& ? \\
    \bottomrule
    
  \end{tabular}
  \label{fig:datasets}
\end{table}

\subsection{Object2Vec and Attribute2Vec}
\label{sec:obj2vec}

We evaluate our new approaches object2vec and attribute2vec with two distinct
experiments. First, we will study embeddings in the realm of link
prediction. For this, we investigate a self created publication network as
described in the last section, called ICFCA. In this formal context, consisting of
authors as objects and publications as attributes, the incidence relation is
then given by \emph{g is author of m}. Link prediction tasks can be split into
two categories: decide in a network which links are missing or predict from a
given temporal network snapshot which new links will occur in the future. In
general this experiment evaluates the ability of object2vec to enhance link
prediction.

In our second experiment we present a task that is of more general interest to
formal concept analysis concerned research. We investigate a correspondence
between the canonical base of implication $\mathcal{L}$ for a given formal
context $(G,M,I)$ and our embedding methods. In particular, we cluster the set
of attributes $M$ based on attribute2vec using a partitioning procedure and obtain
a clustering $\mathcal{C}$. We then count the number of implications $A\to B$
from $\mathcal{L}$ that are in subset relation with a cluster, i.e., $A\cup
B\subseteq C$ for some cluster $C\in\mathcal{C}$. With that we evaluate to which
extend attribute2vec embeddings are able to reflect parts of the implicational
structure from a formal context.

\subsubsection{Link Prediction using Object2Vec}

Network embedding techniques like the prominent node2vec approach have proven
their capability to predict links in huge networks~\cite{grover16}. Even though
these methods employ low dimensional embeddings for their computations, the
actual employed dimension is still incomprehensible high for human
understanding, i.e., more than 100. The realm of formal concept analysis is
especially interested in interpretable and explainable methods. Hence, we focus on
embeddings into $\dR^2$ and $\dR^3$.

Using the afore described ICFCA data set our goal for the link prediction task
is as follows. For learning an embedding we restrict ICFCA to the largest
connected component we discover after we omitted all publications later than
2015. The prediction task then is to find future co-authorships. More
specifically, we predict these co-authorships in the time interval from
2016-01-01 until 2019-08-01. We compare the introduced object2vec using both
architectures, i.e., CBoW and SG, and compare the results with link prediction
computed via node2vec. We may note that the node2vec embeddings are conducted in
two and three dimensions as well. Our precise experimental pipeline looks as follows.
\begin{description}
\item[Compute an embedding via object2vec] We first compute an embedding of the
  formal context with the object2vec approach. For the training of the neural
  network we use a starting learning rate of $1.0$ with linear decrease. We
  train for 200 epochs. We repeat the embedding process for 30 times.
\item[Compute the node2vec embedding] The node2vec embedding uses the parameters
  as used in~\cite{grover16}, i.e., $10$ walks per node with length $80$ and a
  window size of $10$. We use the standard learning rate of $0.025$ as well as
  $1.0$ for comparability reasons with respect to object2vec.  However, we do
  not report the results for 1.0 since they were worse for the two and three
  dimensional case. We also repeat the embedding process for 30 times. The
  procedure parameters $p$ and $q$ of node2vec are chosen by grid search in
  $\{0.25,0.5,1,2,4\}$.
\item[Edge vector generation] Since link prediction is concerned with edges we
  need to focus on the edge set. To generate the edge vectors from the node
  embedding we use the componentwise product of two node
  vectors. In~\cite{grover16} it was shown that this practice is
  favorable. Furthermore, in the special case of bibliometric link prediction
  this is the common approach, cf.~\cite{ganguly17}.
\item[Training examples] Our learning procedure demands for training
  examples. The positive training examples are the edges in the co-author graph
  until (including) 2015. For each positive example, we select one negative
  example, i.e., two randomly picked nodes without an edge connecting them. This
  approach leads to 1278 training examples.
\item[Test examples] The positive examples are the author pairs which have an
  edge after 2016-01-01, but not before. For each such pair, we choose one
  negative example of two authors that neither co-authored before or after 2016.
  Using this we obtain 84 test examples, half of them positive. 
\item[Classification] For the binary classification problem, we employ logistic
  regression. In particular, we use the implementation provided by \emph{Scikit
    Learn}~\cite{scikit-learn}. Logistic regression is often used in
  classification emerging from embeddings in the realm of word2vec, e.g.,
  in~\cite{grover16, perozzi14}. To determine the C parameter of the classifier, we do a
  grid search over $\{10^{-3},10^{-2},\dotsc, 10^2\}$.
\end{description}


\begin{table}[b]
  \centering
    \caption{The result of our classification experiment. We compare node2vec to
    object2vec with the Skip-gram architecture (O2V-SG) and with the object2vec
    Continuous Bag of Words architecture (O2V-CBoW). We display the mean value
    over the 30 rounds of the experiments and also present the sample standard deviation.}
  \begin{tabular}{l|c|ll|ll|ll|}
    \toprule
    Embedding & Dim & \multicolumn{2}{l}{Recall} & \multicolumn{2}{l}{Precision} &
                                                                             \multicolumn{2}{l}{F1-Score}
    \\
         &     & mean & stdev & mean & stdev &mean & stdev \\
    \midrule
    node2vec &2 & 0.56& 0.14 & 0.60 & 0.07  & 0.57 & 0.09\\
    O2V-SG   &2& 0.66 & 0.08 & \textbf{0.65}& 0.03 & 0.65 & 0.05\\
    O2V-CBoW &2& \textbf{0.68} & 0.09 & 0.64 & 0.04 & \textbf{0.66} 	 & 0.06  \\
    \midrule
    node2vec &3 & 0.60& 0.15  & 0.56 & 0.08 & 0.58 	& 0.10\\
    O2V-SG   &3& 0.70 & 0.08 &  	0.62 &  0.04&  	0.66  & 0.06\\
    O2V-CBoW &3& \textbf{0.73} & 0.07 &\textbf{0.65} & 0.06 &\textbf{0.69}  & 0.06 \\
    \bottomrule
  \end{tabular}
 
    \label{tab:classification}
\end{table}

The results of our experiments are depicted in~\cref{tab:classification}. We
observe that in all three indicators, i.e., recall, precision and F1-Score,
node2vec is dominated by the object2vec approach. Furthermore, we see that the
CBoW architecture performs better compared to SG in almost all cases. However,
this benefit is small and well covered in the standard deviation. Finally, we
find that embeddings in three dimensions perform in generally better than in
two.

\subsubsection{Clustering Attributes with Attribute2Vec}

In FCA, implications on the set of attributes of a formal context are of major
interest. While computing the canonial base, i.e., the minimal base of the
implicational theory of a formal context, is often infeasible, one could be
interested in implications of smaller attribute subsets. This leads naturally to
the question of how to identify attribute subsets that cover a large part of the
canonial base, as explained in the beginning of~\cref{sec:obj2vec}. In detail,
the resulting task is at follows: Let $(G,M,I)$ be a formal context and
$\mathcal{L}$ the canonical base $(G,M,I)$. Using a simple clustering procedure
(in our case $k$-means), find a for a given $k \in \dN_0$ partitioning of $M$ in
$k$ clusters such that the ratio of implications completely contained in one
cluster (cf.~\cref{sec:obj2vec}) is as high as possible. We additionally
constraint this task by limiting clusterings in which the largest cluster
is significantly smaller than $|M|$.

We investigate to which extend our proposed approach attribute2vec maps
attributes closely that are meaningful for the afore mentioned task.  We conduct
this research by computing an embedding via attribute2vec and run the $k$-means
clustering algorithm on top of it. We evaluate our approach on the introduced
wiki44k data set. Again, we refer the reader to the collected statistics of this
data set in~\cref{fig:datasets}.  The experimental pipeline looks as follows.

\begin{description}
\item[Applying attribute2vec on wiki44k] We start by computing two and three
  dimensional vector embeddings of the wiki44k attributes using
  attribute2vec. We employ both architectures, SG and CBoW. Here we use again
  the learning rate of $1.0$. In contrast to the embeddings of the ICFCA data
  set we find that $5$ training epochs are sufficient for stabilization of the
  embeddings.
\item[$K$-means clustering] We use the computed embedding to cluster our
  attributes with the $k$-means algorithm. As implementation we rely on the
  Scikit Learn software package. For the initial clustering, we use the so
  called ``k-means++'' technique by~\cite{DBLP:conf/soda/ArthurV07}. The method
  from Scikit Learn runs internally for ten times with different seeds and
  returns the best result encountered. We choose $k$
  from $\{2, 5, 10\}$. We denote the resulting clustering with $\mathcal{C}$.
\item[Computation of the intra-cluster implications] An implication drawn from
  the canonical base, i.e., $A\to B\in\mathcal{L}$, is called
  \emph{intra-cluster} if there is some $C\in\mathcal{C}$ such that $A\cup
  B\subseteq C$. The canonical base of wiki44k has the size 7040. For a
  clustering $\mathcal{C}$ we compute the ratio of intra-cluster implications.
\item[Repetition] We repeat the steps above for 20 times. Hence, we report the
  mean as well as the standard deviation of all the results.
\item[Baseline clusterings] To evaluate the ratios computed in the last step we
  use the following baseline approaches. As a first baseline we make use of a
  random procedure. This results in a random clustering of the attribute set.
  Using an arbitrary random clustering with respect to cluster sizes is
  unreasonable for comparison. Hence, for each $k$-means clustering obtained
  above we generate $50$ random clusterings of the same size and the same
  cluster size distribution. For those we also compute the intra-cluster
  implication ratio. 
\item[Naive $k$-means clustering] As second baseline we envision a more
  sophisticated procedure. We call this the ``naive'' clustering approach.  In
  this setting we encode an attribute $m$ through a binary vector representation
  using the objects from $\{m\}’$ as described in~\cref{sec:retrieving}.  We
  then run twenty rounds of $k$-means and compare the results with the
  attribute2vec approach. For comparison we use here again $k \in \{2, 5, 10\}$.
\end{description}

We display our observations from this experiment in~\cref{tab:cluster}. In there
we omit results for the CBoW architecture since they do not exceed the results
obtained in the random baseline approach. As our main result we find that the
Skip-gram architecture in three dimensions achieves the best intra-cluster
ratio. More specifically, SG outperforms for all cluster sizes and all dimension
the baseline approach and the naive clustering approach by a large margin. This
margin, however, is smaller in the two dimensional case compared to the three
dimensional try. For smallest investigated clustering size, i.e., two, naive
clustering performs worse than the random baseline. We can report for our
experiments the following average maximum cluster sizes. For dimension three we
have 54.5 attributes for $k=2$, 33.2 attributes for $k=5$, and 14.9 attributes
for $k=10$. We also spot for clustering sizes five and ten that the naive
clustering does operate better than the random baseline. Finally, we note that
the stability of the SG result in the $k=2$ case sticks out compared to the
results of the competition.

\begin{table}[t]
    \caption{Results of the clustering task. For the dimensions 2 and 3 we show
    the mean and sample standard deviation values for the Skip-gram
    architecture and the random clusters with same cluster size distributions as
  the corresponding Skip-gram clusterings. We also compared our method with the
  ``naive'' clustering approach.}
  \centering
  \begin{tabular}{lc|ll|ll|ll}
    \toprule
    \# clusters&  & \multicolumn{2}{c}{2}  & \multicolumn{2}{c}{5} & \multicolumn{2}{c}{10}
    \\
    \midrule
    type &dim & mean &stdev &mean &stdev &mean &stdev \\
    \midrule
    Skip-gram & 2& 0.1608 & 0.0031 & 0.0703 & 0.0122 & 0.0069& 0.0004 \\
    Random & 2 & 0.0534& 0.0412& 0.0084& 0.0088& 0.0010 & 0.0007 \\
    \midrule
    Skip-gram &3 & \textbf{0.3217} & 0.0005 & \textbf{0.1028} 	& 0.0218 & \textbf{0.0080} & 0.0001    \\
    Random & 3 & 0.0219 & 0.0107 & 0.0036 & 0.0027 & 0.0007 & 0.0004\\
    \midrule
    Naive Clustering & - & 0.0158 & 0.0000 & 0.0055 & 0.0042 & 0.0035
    & 0.0002 \\
    \bottomrule
  \end{tabular}
  \label{tab:cluster}
\end{table}

\subsubsection{Discussion}
In both our experiments we find that all embedding procedures perform better in
dimension three than in dimension two. This is not surprising since a higher
embedding dimension posses a higher degree of freedom to represent
structure. Furthermore, in both experiments we can show that the object2vec and
attribute2vec approaches do succeed and outperform the competition.

The first experiment reveals some particularities. We find that our embedding
approach has a big advantage over the also considered node2vec procedure. For
the later method one has to perform additional parameter tuning for the $p$ and
$q$ parameters~\cite{grover16} to obtain the presented results. Not doing so
leads worse to performance. The object2vec embedding procedure, as defined
in~\cref{sec:modelobj}, needs no parameters for the training example
generation. In addition to that is the set of computed training examples
deterministic. However, these positive properties come with a high computational
cost, i.e., the necessity of computing the set of formal concepts. For the sake
of completeness we also report on high dimensional embeddings. When applying
node2vec with embedding dimension one hundred we find the results outperforming
the so far reported. Still, since such embeddings conflict with the goal in this
work, i.e., the human interpretability and explainability of embeddings, we
discard them. 

The second experiment also unraveled different properties of our novel embedding
technique. We witness that the number of learning epochs is much smaller
compared to the first experiment. We suspect that this can be attributed to the
higher average number of attributes per intent in wiki44k compared to the
average number of objects per extent in ICFCA. Furthermore, we think there is an
influence by the fact that the absolute number of intents in wiki44k is greater
than the absolute number of extents in ICFCA. As application for our
attribute2vec approach we envision the computation of parts of the canonical
base of a formal context. Taking the average maximum cluster sizes into account
we claim that this application is reasonable. We do not consider the then
necessary computation of all formal concepts as an disadvantage. The computation
of the canonical base is in general far more complex than computing the set of
all concepts and our embedding. We are surprised that the naive clustering
baseline performs not significantly better than the random baseline. We presume
that the employed distance function from $k$-means applied to the binary
representation vectors is not useful to reflect implicational knowledge in the
embedding space. Hence, we admit that more powerful base line comparisons may be
considered here. However, so far we are not aware of less computational
demanding ones with respect to object2vec and attribute2vec respectively.

\subsection{FCA Features Through Closure2Vec}
\label{sec:experimentsfca}
To evaluate the embeddings produced by closure2vec we introduce two FCA related
problems: computing the covering relation of a concept lattice and computing the
canonical base for a given formal context. The intention here is to rediscover
structural features from FCA in low dimensional embeddings. We choose for the
dimension two and three in order to respect our overall goal for human
interpretability and explainability. We test two different functions $\delta$
for the distance between the output layers $O_{L},O_{L}’$,
cf.~\cref{sec:nonlinear}. More specific, we employ the Euclidean distance and
the cosine distance. We conduct our experiments on two larger than average
sized formal contexts.  Precisely, we test the
Wiki44k\cite{DBLP:conf/icfca/Hanika0S19,DBLP:conf/semweb/HoSGKW18} and the well
investigated Mushroom data set\cite{Dua:2019,schlimmer1981mushroom}. A
comparison of the statistical properties of data sets we use is depicted
in~\cref{fig:datasets}.

\subsubsection{Distance of Covering Relation}
\label{sec:coverrelation}

For this experiment we compute first the set of all concepts $\mathfrak{B}$ of
a given formal context $\mathbb{K}$. Using the concept order relation $<$ as
introduced in~\cref{sec:FCA} a \emph{covering relation} on $\mathfrak{B}$ is given
by: ${\prec}\subset(\mathfrak{B}\times\mathfrak{B})$ with $A \prec B$, if and
only if $A < B$ and there is no $C \in \mathfrak{B}$ such that $A<C <B$. The
covering relation is an important tool in \emph{ordinal data science}. Elements of
the covering relation are essential for investigating and understanding order
relations and order diagrams. However, in the case of large formal contexts
computing the covering relation of the concept lattice can get computationally
expensive, as this problem is linked to the transitive reduction of a
graph~\cite{doi:10.1137/0201008}.

The experimental setup now is as follows. First we train the neural network
architecture as introduced in~\cref{sec:nonlinear}. Hence, an input element is a
binary encoded attribute $X_{i}$, another binary encoded attribute set $Y_{i}$
of size $|X_{i}|\pm 1$, and the closure Hamming distance of them. In our
experiment we fix $|X_{i}|$ to be four or less. Furthermore, we train the
network over five epochs using the learning rate 0.001, with batch size 32, and
mean-squared-error as loss function.

To evaluate the structural quality of the obtained embedding we computed the
covering relation of the concept lattices using 1000 threads on highly parallelized
many-core systems, which took about one day. In the following we compare the
distances between pairs of concepts in covering relation against concepts that are
not in covering relation. The results of these experiments is depicted
in~\cref{fig:cover_relation}. For all embeddings the expected distance for two
concepts in covering relation a significantly smaller than the expected difference
of two concepts not in covering relation. This is true for both data sets. However,
the observed effect is more notable for the wiki44k data set. The Euclidean
distance outperforms the cosine distance in all experiments using two and three
dimensions.

\begin{table}[t]
  \centering
  \caption{Distance between concept pairs, that are in covering relation (CR)
    and that are not in the covering relation (Non-CR).}
  \textbf{Wiki44k:}\\
  \begin{tabular}{llrr}
    \toprule
    \textbf{Dim 2}&& Mean: & Std.: \\
    \midrule
    Euk:&CR:& 0.17 & 0.14 \\
    &Non-CR:& 0.71 & 0.59\\
    \midrule
    Cos:&CR:& 0.63 & 0.33 \\
    &Non-CR:& 0.99 & 0.71\\
    \bottomrule
  \end{tabular}
  \begin{tabular}{llrr}
    \toprule
    \textbf{Dim 3}&& Mean: & Std.: \\
    \midrule
    Euk:&CR:& 0.16 & 0.15 \\
    &Non-CR:& 1.54 & 1.41\\
    \midrule
    Cos:&CR:& 0.15 & 0.27 \\
    &Non-CR:& 0.36 & 0.43\\
    \bottomrule
  \end{tabular}

  \medskip
  
  \textbf{Mushrooms:}\\
  \begin{tabular}{llrr}
    \toprule
    \textbf{Dim 2}&& Mean: & Std.: \\
    \midrule
    Euk:&CR:& 0.14 & 0.41 \\
    &Non-CR:& 0.51 & 0.38\\
    \midrule
    Cos:&CR:& 0.49 & 0.43 \\
    &Non-CR:& 0.96 & 0.74\\
    \bottomrule
  \end{tabular}
  \begin{tabular}{llrr}
    \toprule
    \textbf{Dim 3}&& Mean: & Std.: \\
    \midrule
    Euk:&CR:& 0.13 & 0.29 \\
    &Non-CR:& 0.49 & 0.41\\
    \midrule
    Cos:&CR:& 0.05 & 0.12 \\
    &Non-CR:& 0.18 & 0.22\\
    \bottomrule
  \end{tabular}
  \label{fig:cover_relation}
\end{table}

\begin{table}[t]
  \centering
  \caption{Distances of implication premises and conclusions for singleton
    implications (S-Impl) and implications (Impl) in the computed embeddings.}
    \textbf{Wiki44k:}\\
  \begin{tabular}{llrr}
    \toprule
    \textbf{Dim 2}&& Mean: & Std.: \\
    \midrule
    Euk:&S-Imp:& 0.94 & 0.28 \\
    &Non-S-Imp:& 0.73 & 0.40\\
    &Imp:& 0.44& 0.33 \\
    &Non-Imp:& 0.51 & 0.48\\
    \midrule
    Cos:&S-Imp:& 1.00 & 0.69 \\
    &Non-S-Imp:& 1.00 & 0.67\\
    &Imp:& 1.01& 0.70 \\
    &Non-Imp:& 1.01 & 0.70\\
    \bottomrule
  \end{tabular}
  \begin{tabular}{llrr}
    \toprule
    \textbf{Dim 3}&& Mean: & Std.: \\
    \midrule
    Euk:&S-Imp:& 0.93 & 0.29 \\
    &Non-S-Imp:& 0.83 & 0.47\\
    &Imp:& 0.60& 0.45 \\
    &Non-Imp:& 0.65 & 0.55\\
    \midrule
    Cos:&S-Imp:& 0.69 & 0.53 \\
    &Non-S-Imp:& 0.90 & 0.43\\
    &Imp:& 0.93& 0.56 \\
    &Non-Imp:& 0.96 & 0.58\\
    \bottomrule
  \end{tabular}

  \bigskip
  
  \textbf{Mushrooms:}\\
  \begin{tabular}{llrr}
    \toprule
    \textbf{Dim 2}&& Mean: & Std.: \\
    \midrule
    Euk:&S-Imp:& 0.95 & 0.41 \\
    &Non-S-Imp:& 0.45 & 0.32\\
    &Imp:& 0.70 & 1.01 \\
    &Non-Imp:& 1.02 & 0.98\\
    \midrule
    Cos:&S-Imp:& 1.00 & 0.65 \\
    &Non-S-Imp:& 1.00 & 0.65\\
    &Imp:& 1.00& 0.69 \\
    &Non-Imp:& 0.99 & 0.69\\
    \bottomrule
  \end{tabular}
  \begin{tabular}{llrr}
    \toprule
    \textbf{Dim 3}&& Mean: & Std.: \\
    \midrule
    Euk:&S-Imp:& 0.86 & 0.34 \\
    &Non-S-Imp:& 0.42 & 0.27\\
    &Imp:& 0.57& 0.66 \\
    &Non-Imp:& 0.88 & 0.68\\
    \midrule
    Cos:&S-Imp:& 1.05 & 0.36 \\
    &Non-S-Imp:& 1.02 & 0.35\\
    &Imp:& 0.88& 0.53 \\
    &Non-Imp:& 0.89 & 0.55\\
    \bottomrule
  \end{tabular}
  \label{fig:impl}
\end{table}

\subsubsection{Distance of Canonical Bases}

In this experiment we look at the canonical base of implications for a formal
context and try to rediscover this canonical base in the computed embedding. The
experiment consists of two different parts. The first part has the following
setup. Take an implication of the canonical base, i.e., take $(P,C)$ where
$P\subseteq M$ is the premise and $C\subseteq M$ is the conclusion. For such an
implication, construct all single conclusion implications $(P,c)$ where $c\in
C$. Then, compute the distance (with the same distance functions as used for the
embedding process) of $P$ and the embedding for all $c$. Additionally, also
embed all $m\in M\backslash C$. Essentially, by doing so, we embedded all $m\in
M$ using our embedding function. We do this for all elements of the canonical
base.

Now compute for all implications from the canonical base the following
distances. First, the distances between a premise $P$ and all its singleton
conclusions $c$ of $(P,C)$. Secondly, the distances between $P$ and $m\in
M\setminus C$. Equipped with all these distances we try to detect a structural
difference in favor of the embedded implications in contrast to other
combinations of attribute sets, i.e., pairs of premises $P$ and $m\in M\setminus
C$. When using the cosine distance function we observe minimal to no structural
difference. However, when using the Euclidean distance function we detect a
significant structural difference. In particular, for pairs of $(P,c)$ with
$c\in C$ the mean of the distances is significantly higher than the mean
distance of the distances for some premise $P$ with all singleton sets $m\in
M\setminus C$. The observation is even stronger in the case of the Mushroom data
set when compared to wiki44k. The results are depicted in~\cref{fig:impl} by the
rows S-Imp (for the combinations $(P,c)$) and Non-S-Imp (for combinations
$(P,m)$). 

For the second part we embed both attribute sets, i.e., the premise $P$ and the
conclusion $C$, for an implication from the canonical base. For every pair we
compute the distance of $P$ and $C$ and compare them the distance between two
randomly generated attribute sets $X,Y$ with $|X|=|P|$ and $|Y|=|C|$. The later
is set for reasons of comparability. As shown in~\cref{fig:impl} we detect again
structural differences for the considered implications and the randomly
generated sets using average distances as features. In fact, the distance
between the two randomly generated sets is on average larger than the average
distance between premises and conclusions from implications drawn from the
canonical base.



\subsubsection{Discussion}
In both experiments concerning the closure system embedding we are able to
rediscover and infer conceptual structures in the embeddings. In general we find
that it is favorable to use for the Euclidean distance case the squares of the
closure Hamming distances as output, i.e., for $z_{i}$. Overall we discovered a
significant bias in distances of embedded concepts that are in covering
relation. This signal is even stronger for the wiki44k data sets. We suspect
that this can be attributed to the lesser density of this data set compared to
Mushroom. The observations can be exploited naturally for mining covering
relations, or important parts of those, from embedded concept lattices.

For second experiment we can report that the neural network embedding of parts
of the closure system allows for rediscovering implicational structures. Since
we trained our neural network on attribute sets of size four and smaller, we
were interested in the number of closures our algorithm encounters. For both
data sets we can report that this number is approximately 10\% of all closures.

The described a differing behavior for embedding whole implications, i.e.,
premise and conclusions, and single-conclusion implications. In cases where an
attribute is element of the conclusion of some canonical base implication the
distance to the premise set is significant. At this point we are unable to
provide a rational for that. The same goes for the second part of the experiment
where we compare premise-conclusion pairs of canonical base implications with
randomly generated pairs of attributes having the same sizes. At this point we
are not aware how this observation can improve the computations of the canonical
base. This would need a more fundamental investigation of bases of implicational
theories with respect to closure system embeddings. For example, one has to
investigate if the observed effect is also true for other kinds of bases, e.g.,
direct basis~\cite{DBLP:journals/dam/AdarichevaNR13}.  As a final remark we
report that the Euclidean distance performed in all our experiments better than
the cosine distance for both problem settings.

\subsubsection{Empirical Structural Observations}

\begin{figure}[t]
  \centering
  \includegraphics[width=0.3\textwidth]{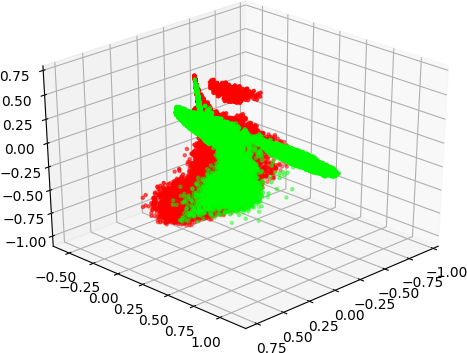} 
  \includegraphics[width=0.3\textwidth]{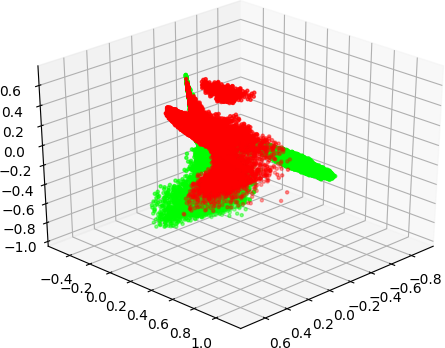}
  \includegraphics[width=0.3\textwidth]{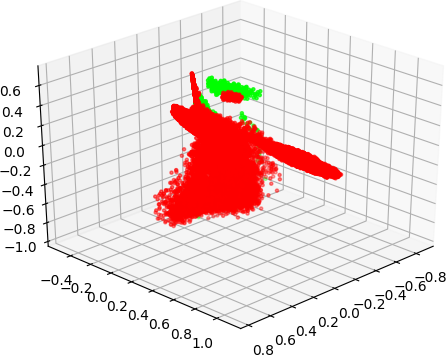}
  \caption{Embedding of all concepts of the mushroom dataset into three
    dimensions. The coloring is done as follows. Left: The edible mushrooms are
    green, the non-edible mushrooms are red. Middle: The mushrooms with a broad
    gill are green, the mushrooms with a narrow gill are red. Right: The
    mushrooms with a crowded gill spacing are green and the mushrooms with a
    distant gill spacing are red.}
  \label{fig:attr_clust}
\end{figure}

Additionally to the two experiments above we want to provide some insights we
discovered for conceptual structures in our embeddings.  We note that concepts
sharing attributes seem to result in meaningful clusters. To see this one can
consider \Cref{fig:attr_clust}. In there we see the same embedding of all formal
concepts of the Mushroom data set in three dimensions for three times. In each
case we colored different sets of concepts with red and green. In the first
(\Cref{fig:attr_clust}, left) we depict with red the not edible mushrooms and
with green the edible ones. Even though we employed a very low dimensional
embedding, we can still visually identify the two different classes. Hence, our
embedding approach preserved some structure. The same seems true for the other
depictions in which we colored broad gill versus narrow gill and crowded gill
spacing versus distant gill spacing. Therefore, we are confident that our
approach for low dimensional embeddings of closure systems using neural networks
is beneficial. Moreover, as this empirical study shows, is the low dimensional
representation still visitable by a human data analyst.

\section{Conclusion}
\label{sec:conc}
In this work we presented fca2vec, a first approach for modeling data and
conceptual structures from FCA to the realm of embedding techniques based on
word2vec. Taken together, the ideas in this paper outline an appealing
connection between formal concepts, closure systems, low dimensional embeddings,
and neural networks learning procedures. We are confident that future research
may draw on the demonstrated first steps, in particular object2vec,
attribute2vec and learning closure operator representations. In our
investigation we have found convincing theoretical as well as experimental
evidence that FCA based methods can profit from word2vec like embedding
procedures. We demonstrated that closure operator embeddings that result from
simple neural network learning algorithms may capture significant portions of
the conceptual structure. Furthermore, we were able to demonstrate that the
cover relation of the set of formal concepts may be partially extracted from a
low dimensional embedding.  Especially when employing conceptual structures in
large and complex data this notion is an important step forward. Moreover, we
were able to enhance the common embedding approach node2vec in low dimensional
cases, i.e., dimension two or three.

All these results were achieved while obeying the constraint for human
interpretable and/or explainable embeddings. Applying neural network learning
procedures on large and complex data does not necessarily constitute a
contradiction to explainability when combined with conceptual notions from
FCA. However, our work clearly has some limitations. The ideas for object2vec
and attribute2vec do require the computation of the concept lattice. In future
work we will investigate if this obligation can be weakened through statistical
methods. Despite this we believe that our work could be the standard framework
for word2vec like FCA approaches. As a next concrete application we are
currently in the process of investigating genealogy graphs in combination with
co-authorship networks. These multi-relational data sets are large and complex
and do require novel methods, like fca2vec, to draw knowledge from
them. Questions for the relation of particular nodes in such data sets may be
answered through conceptual embeddings.  In this context we do also take
Resource Description Framework (RDF) structures into account. Ideas for
embedding those is a state of the art approach to knowledge graph
structures. Hence, enhancing RDF embeddings using fca2vec as well as discovering
conceptual structures in RDF is a fruitful endeavor. Finally, on a more
technical note we are interested in characterizing sets of formal context data
allowing for particular representations of the closure operator, e.g., closure
operators representable by affine maps.

\subsection*{Acknowledgement}
This work is partially funded by the German Federal Ministry of Education and
Research (BMBF) in its program ``Quantitative Wissenschaftsforschung'' as part
of the REGIO project under grant 01PU17012, and in its program ``Forschung zu den
Karrierebedingungen und Karriereentwicklungen des Wissenschaftlichen Nachwuchses
(FoWiN)'' under grant 16FWN016.

\bibliographystyle{bib/splncs04} \bibliography{bib/paper}

\end{document}